\documentclass{article} 
\usepackage{iclr2026_conference,times}

\usepackage{amsmath,amsfonts,bm}

\def\eqref#1{equation~\ref{#1}}

\def\1{\bm{1}}

\DeclareMathAlphabet{\mathsfit}{\encodingdefault}{\sfdefault}{m}{sl}
\SetMathAlphabet{\mathsfit}{bold}{\encodingdefault}{\sfdefault}{bx}{n}

\usepackage{hyperref}
\usepackage{url}

\definecolor{citecolor}{RGB}{17,80,197}
\hypersetup{
    colorlinks=true,
    linkcolor=red,
    citecolor=citecolor,
    filecolor=magenta,      
    urlcolor=magenta,
}

\usepackage{amsmath}
\usepackage{bm}
\usepackage{graphicx}
\usepackage{multirow}
\usepackage{pifont}
\let\oldding\ding
\renewcommand{\ding}[2][1]{\scalebox{#1}{\oldding{#2}}}

\usepackage{amsthm}
\newtheorem{assumption}{Assumption}
\newtheorem{lemma}{Lemma}

\usepackage{subcaption}
\usepackage{booktabs}
\usepackage[table]{xcolor}
\definecolor{highlight}{RGB}{210,244,242}
\usepackage{wrapfig}
\usepackage{enumitem}

\title{K-Sort Eval: Efficient Preference Evaluation for Visual Generation via Corrected VLM-as-a-Judge}

\author{\textbf{Zhikai Li}$^{1,4}$, \textbf{Jiatong Li}$^5$, \textbf{Xuewen Liu}$^1$, \textbf{Wangbo Zhao}$^4$, \textbf{Pan Du}$^4$, Kaicheng Zhou$^6$, \\
\textbf{Qingyi Gu}$^{1,\dagger}$, \textbf{Yang You}$^{4,\dagger}$, \textbf{Zhen Dong}$^{3,\dagger}$, \textbf{Kurt Keutzer}$^2$ \\
$^1$Institute of Automation, Chinese Academy of Sciences  { } \\
$^2$University of California, Berkeley   { } $^3$University of California, Santa Barbara\\
$^4$National University of Singapore  { }
$^5$Nanyang Technological University  { }
$^6$Collov Labs \\
\\
{\small \textbf{Dataset:} \url{https://huggingface.co/datasets/ksort/K-Sort-Eval}} \\
{\small\textbf{Code:} \url{https://github.com/zkkli/K-Sort-Eval}}
}

\iclrfinalcopy 
\begin{document}

\maketitle

\begin{abstract}
  The rapid development of visual generative models raises the need for more scalable and human-aligned evaluation methods. While the crowdsourced Arena platforms offer human preference assessments by collecting human votes, they are costly and time-consuming, inherently limiting their scalability. Leveraging vision-language model (VLMs) as substitutes for manual judgments presents a promising solution. However, the inherent hallucinations and biases of VLMs hinder alignment with human preferences, thus compromising evaluation reliability. Additionally, the static evaluation approach lead to low efficiency.
  In this paper, we propose K-Sort Eval, a reliable and efficient VLM-based evaluation framework that integrates posterior correction and dynamic matching. 
  Specifically, we curate a high-quality dataset from thousands of human votes in K-Sort Arena, with each instance containing the outputs and rankings of $K$ models.
  When evaluating a new model, it undergoes ($K$+1)-wise free-for-all comparisons with existing models, and the VLM provide the rankings.
  To enhance alignment and reliability, we propose a posterior correction method, which adaptively corrects the posterior probability in Bayesian updating based on the consistency between the VLM prediction and human supervision.
  Moreover, we propose a dynamic matching strategy, which balances uncertainty and diversity to maximize the expected benefit of each comparison, thus ensuring more efficient evaluation.
  Extensive experiments show that K-Sort Eval delivers evaluation results consistent with K-Sort Arena, typically requiring fewer than 90 model runs, demonstrating both its efficiency and reliability.
  The \href{https://huggingface.co/datasets/ksort/K-Sort-Eval}{dataset} and \href{https://github.com/zkkli/K-Sort-Eval}{code} are publicly available.
\end{abstract}

\section{Introduction}
\label{sec:introduction}

Visual generative models have achieved remarkable progress, enabling high-quality outputs in tasks such as text-to-image~\citep{betker2023dell3,podell2023sdxl,rombach2022sd} and text-to-video~\citep{esser2023structure,he2022latent,zhou2022magicvideo} generation.
This rapid advancement has fueled growing interest in the field, driving the continuous emergence of new models~\citep{dynamic, zhao2025rapid, zhao2026dydit} and applications~\citep{duunsupervised,li2023psaq,li2025coleq}.
However, the evaluation methods fail to keep pace with the model development, struggling to offer a fair and comprehensive assessment of generated outputs.
Traditional metrics such as IS~\citep{salimans2016improved}, FID~\citep{heusel2017gans}, and FVD~\citep{unterthiner2018towards}, while widely used, are criticized for their inability to capture human preference judgements in the real world.
In response, several efforts attempt to construct static datasets for human preference evaluation~\citep{kirstain2023pick,wu2023better,xu2023imagereward}. However, these datasets are inherently limited by their closed-ended nature, lack of user interaction, and inability to stay up-to-date~\citep{li2025ksort,chiang2024chatbot}.

In contrast, the Arena method, which is an open and live benchmark platform, is a more effective approach for human preference evaluation.
It captures real human feedback by collecting crowdsourced manual voting on model comparisons, allowing for a better reflection of real-world preferences.
This approach is initially used for evaluating large language models (LLMs)~\citep{chiang2024chatbot} and is later extended to apply to visual generative models~\citep{jiang2024genai,li2025ksort} and multi-modal models~\citep{lu2024wildvision,chou2024visionarena}.
Notably, for visual generative models, K-Sort Arena~\citep{li2025ksort} significantly improves the efficiency and reliability of Arena evaluation by incorporating an improved comparison mode, probabilistic modeling and updating, and an optimized model matching strategy.
Nevertheless, due to its heavy reliance on human voting, it still faces significant cost and time-consuming challenges. The excessive human involvement can also lead to potential leaderboard illusion issues~\citep{Singh2025TheLI}, and the potential delays in crowdsourcing may hinder the timely evaluation of new models, thus limiting its scalability.

\begin{figure}[t]
  \centering
  \includegraphics[width=1.0\textwidth]{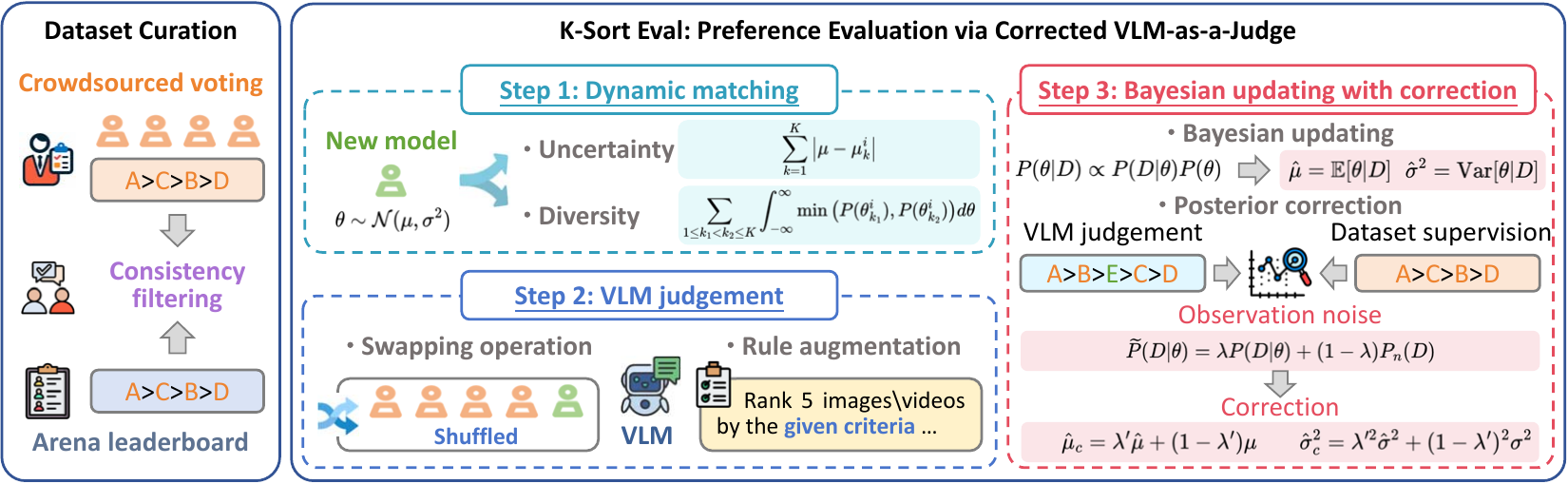}
  \caption{Overview of the proposed K-Sort Eval. First, a high-quality dataset is curated through consistency filtering. When evaluating a new model, we begin with dynamic matching to select the most informative instance. Then, two prompt strategies are employed to effectively guide the VLM and mitigate hallucinations. Finally, Bayesian updating with correction is performed, where the discrepancy between VLM prediction and dataset supervision is treated as observation noise to correct the posterior estimation of model capability.}
  \label{fig:overview}
\end{figure}

Therefore, leveraging powerful vision-language models (VLMs) to replace manual judgements, known as VLM-as-a-Judge~\citep{chen2024mllm,liu2025your}, presents a promising solution. 
For instance, T2I-CompBench~\citep{huang2023t2i} investigates the potential of VLMs for compositionality evaluation of text-to-image models, and VIEScore~\citep{ku2023viescore} demonstrates that VLMs can provide results that have a certain relevance to human evaluations.
However, VLMs are inherently prone to hallucinations, inconsistencies, and biases, raising concerns about their \textbf{reliability} as a trustworthy substitute for human judgements~\citep{li2024generation,gu2024survey}.
While certain techniques from LLM-as-a-Judge, such as swapping operation~\citep{zheng2023judging}, rule augmentation~\citep{bai2022constitutional}, and multi-agent collaboration~\citep{li2023prd} strategies, can be adapted, they fail to addressing these inherent issues, thus still hindering reliable alignment with human preferences.
In addition, existing methods follow the static evaluation style, which necessitates processing the entire large-scale dataset, making the \textbf{efficiency} fall short of expectations.

In this paper, we propose K-Sort Eval, built on top of K-Sort Arena~\citep{li2025ksort}, to enable efficient and reliable visual preference evaluation via VLM-as-a-Judge.
Specifically, we curate a \textbf{high-quality dataset} from thousands of human votes in K-Sort Arena, where each instance consists of the outputs of $K$ models along with their rankings. First, we use Spearman's rank correlation coefficient~\citep{spearman1961proof} to align local rankings within each instance with the overall leaderboard, filtering out contaminated votes that significantly deviate from typical preference patterns.
Then, Llama Guard~\citep{inan2023llama} is applied to screen out potentially harmful or offensive user prompts, ultimately resulting in a widely applicable and representative dataset.
With the above dataset, the model to be evaluated can form a ($K$+1)-wise free-for-all comparisons with the $K$ models in each instance.
Notably, we propose optimized algorithms to ensure the reliability and efficiency of the evaluation.
\raisebox{-1.1pt}{\ding[1.1]{182\relax}} \textbf{For reliability}, we propose a posterior correction method, which utilizes the human preference outcomes in the dataset as supervision to correct the update of model capability. Following K-Sort Arena, we employ probabilistic modeling to represent model capabilities, and use VLM judgements as observations to update the posterior probability via Bayesian inference. Here, we treat the misalignment between VLM results and supervision as observation noise, and thus derive an adaptive correction policy for the posterior probability.
\raisebox{-1.1pt}{\ding[1.1]{183\relax}} \textbf{For efficiency}, we propose a dynamic matching strategy, which leverages both uncertainty and diversity to promote the comparison of maximum expected gains. It avoids worthless comparisons, enabling the evaluation to be accomplished using only a subset of the dataset, without traversing the entire dataset.
The overview of K-Sort Eval is illustrated in Figure \ref{fig:overview}.

Table \ref{tab:compare} compares K-Sort Eval with existing evaluation methods across various categories, highlighting its advantages in scalability, alignment, efficiency, and generalizability.
Furthermore, we conduct extensive experiments to validate the effectiveness of K-Sort Eval, and the results show that it achieves results consistent with K-Sort Arena, while requiring fewer than 90 model runs in most cases, demonstrating its strong potential for reliable preference evaluation of generative models.

\setlength{\tabcolsep}{2pt}
\begin{table}[t]\scriptsize
\centering
\setlength{\belowcaptionskip}{-6pt}
\setlength{\abovecaptionskip}{4pt}
\caption{Comparison with existing evaluation methods. The proposed K-Sort Eval demonstrates advantages in terms of scalability, alignment, efficiency, and generalizability.}
\begin{tabular}{@{}lcccc@{}}
\toprule
\multirow{2}{*}{\textbf{Evaluation Method}} & \textbf{Pipeline}                   & \textbf{Judgement}  & \textbf{Data Selection}  & \textbf{Target Model}           \\
& \textbf{(Scalability)} & \textbf{(Alignment)}  & \textbf{(Efficiency)} & \textbf{(Generalizability)} \\ \midrule
K-Sort Arena~\citep{li2025ksort}        & Manual        & \cellcolor{highlight}Human             & No dataset   & \cellcolor{highlight}Image \& Video     \\
TIFA~\citep{hu2023tifa}, T2I-CompBench~\citep{huang2023t2i}    & \cellcolor{highlight}Automatic        & Predefined metric & Static       & Image              \\
VBench~\citep{huang2024vbench}, EvalCrafter~\citep{liu2024evalcrafter}    & \cellcolor{highlight}Automatic        & Predefined metric & Static       & Video              \\
ImageReward~\citep{xu2023imagereward}, HPD~\citep{wu2023better}  & \cellcolor{highlight}Automatic & Reward model & Static & Image \\
GenAI-Bench~\citep{li2024genai}          & \cellcolor{highlight}Automatic        & Reward model      & Static       & \cellcolor{highlight}Image \& Video     \\
VIEScore~\citep{ku2023viescore}, MiniGPT4-CoT~\citep{huang2023t2i} & \cellcolor{highlight}Automatic        & VLM judge         & Static       & Image              \\
VideoPhy~\citep{bansal2024videophy}, VideoScore~\citep{he2024videoscore}     & \cellcolor{highlight}Automatic        & VLM judge         & Static       & Video              \\
K-Sort Eval (Ours)    & \cellcolor{highlight}Automatic        & \cellcolor{highlight}Corrected VLM judge & \cellcolor{highlight}Dynamic & \cellcolor{highlight}Image \& Video \\
\bottomrule
\end{tabular}
\label{tab:compare}
\end{table}

\section{Related Work}
\label{sec:related_work}

\textbf{Visual Generation Evaluation.}
Traditional metrics assess the quality of generated content by measuring its divergence from real data, with FID~\citep{heusel2017gans} and IS~\citep{salimans2016improved} commonly used for images, and FVD~\citep{unterthiner2018towards} for videos.
To enable more comprehensive evaluations, various benchmarks have been proposed, including image benchmarks such as TIFA~\citep{hu2023tifa} and T2I-CompBench~\citep{huang2023t2i}, as well as video benchmarks like VBench~\citep{huang2024vbench} and EvalCrafter~\citep{liu2024evalcrafter}.
However, these benchmarks still rely on predefined metrics, which typically fail to reflect human preferences. 
Several efforts focus on developing reward models, such as ImageReward~\citep{xu2023imagereward}, HPD~\citep{wu2023better}, Pick-a-Pic~\citep{kirstain2023pick}, and GenAI-Bench~\citep{li2024genai}, which finetune the CLIP model~\citep{radford2021learning} to achieve better alignment. However, CLIP’s limited ability to capture high-level semantics continues to hinder alignment and fairness in evaluation.

\textbf{Arena Evaluation with Human Preferences.}
To enable evaluations that better align with human preferences, Chatbot Arena~\citep{chiang2024chatbot} builds a platform for anonymized pairwise comparisons of language models, and collects user judgements on the outputs to obtain an overall model ranking.
This approach also inspires efforts in other domains, such as WildVision~\citep{lu2024wildvision} for multi-modal models and GenAI Arena~\citep{jiang2024genai} for visual generative models. 
Furthermore, K-Sort Arena~\citep{li2025ksort} introduces $K$-wise comparisons ($K>$2), leveraging probabilistic modeling and matching strategies to enable more efficient and reliable evaluation of visual generative models.
Despite their success, these methods are resource-intensive and time-consuming, leading to evaluation delays and potential issues such as leaderboard overfitting or illusion~\citep{Singh2025TheLI}, which inherently limit their scalability.

\textbf{Large Model as a Judge.}
In addition to generation, the judgement capabilities of LLMs, called LLM-as-a-judge, have also been explored for scoring and ranking tasks~\citep{li2024generation}.
To address the hallucinations and biases issues, various strategies have been developed, including swapping operations~\citep{zheng2023judging}, rule augmentation~\citep{bai2022constitutional}, multi-agent collaboration~\citep{li2023prd}, demonstrations~\citep{jain2023multi}, and multi-turn interactions~\citep{bai2023benchmarking}. Likewise, leveraging VLMs as judge models to harness their visual understanding capabilities has shown great promise~\citep{chen2024mllm, liu2025your}. VLMs have been employed to evaluate the quality of generated images~\citep{ku2023viescore, huang2023t2i} and videos~\citep{bansal2024videophy, he2024videoscore}. 
However, VLMs still exhibit inherent hallucinations and biases, which limit their ability to make judgments fully aligned with human preferences. Thus, how to utilize VLMs for reliable human preference evaluations remains an open issue.
\section{Methodology}
\label{sec:method}

In this paper, we propose K-Sort Eval, an efficient VLM-as-a-Judge evaluation framework that reliably aligns human preferences.
K-Sort Eval benefits from both new datasets and novel evaluation strategies.
The dataset curation is presented in Section \ref{sec:3-1}, followed by the proposed methods for improving evaluation reliability and efficiency in Sections \ref{sec:3-2} and \ref{sec:3-3}, respectively, with the overall evaluation pipeline ultimately formed in Section \ref{sec:3-4}.

\subsection{Human Preference Dataset Curation}
\label{sec:3-1}

K-Sort Arena~\citep{li2025ksort}, as a precursor to this work, serves as the platform for collecting data on human preferences. K-Sort Arena organizes free-for-all comparisons among $K$ visual generative models, including text-to-image models and text-to-video models. Here, $K>2$ and is set to 4 in practice. Leveraging the intuitive nature of visual perception, users can confidently vote to rank the outputs based on their preferences. Each data instance
$\mathcal{D}^{i}=(P^{i}, \mathcal{O}^{i},\mathcal{R}^{i})$, which consists of one prompt $P^{i}$ along with the outputs $\mathcal{O}^{i}=\{O_k^{i}\}_{k=1}^K$ of the $K$ models $\mathcal{M}^{i}=\{M_k^{i}\}_{k=1}^K$ and the user-voted rankings $\mathcal{R}^{i}=\{R_k^{i}\}_{k=1}^K$, becomes a preliminary candidate for the dataset $\mathcal{D}_c$. Here, $i=1,2,\cdots,N_c$, and $N_c$ is the number of candidate instances.

K-Sort Arena makes efforts in terms of data diversity and voting consistency, thus providing a fundamental assurance of data quality.
K-Sort Arena supports input prompts sampled from existing datasets as well as fresh prompts customized by users, which facilitates prompts from diverse domains and varying complexity levels. 
To ensure the voting quality, all crowdsourced participants are professors and graduate students specializing in visual generation.
They all complete pre-voting training, particularly on the evaluation criteria, which is detailed in Appendix \ref{app:criteria}. 
Additionally, as an open-source project, K-Sort Arena actively encourages contributions from the public community, with the criteria serving as a guiding reference for their voting as well.

To date, K-Sort Arena has collected thousands of votes from both crowdsourced participants and the public community.
For text-to-image generation, we have gathered over 1,800 human votes across 35 models, resulting in more than 10,800 pairwise comparisons.
For text-to-video generation, we have collected more than 700 human votes across 14 models, which are equivalent to more than 4,200 pairwise comparisons.
However, despite training and provided guidelines, inherent subjective differences among individuals can lead to inconsistencies in voting, with some votes deviating from typical preference patterns. In some cases, unintended operational errors may further introduce inaccuracies, posing the risk of preference data contamination.

To this end, we apply a careful filtering for each instance to ensure a representative dataset.
Due to probabilistic modeling and Bayesian updating, the leaderboard constructed by K-Sort Arena demonstrates strong robustness to preference noise, i.e., the leaderboard is sufficiently reliable.
Thus, we use the consistency between the local ranking  $\mathcal{R}^{i}$ within each instance and the overall ranking $\mathcal{R}^{(L)}$ in the leaderboard as the filtering criterion. Specifically, we quantify this consistency by calculating Spearman’s rank correlation coefficient $\rho$ as follows:
\begin{equation}
    \rho_i=\frac{\sum_{k=1}^K\Big(R_k^{i}-\bar{R}^{i}\Big)\Big(R_k^{(L)}-\bar{R}^{(L)}\Big)}{\sqrt{\sum_{k=1}^K\Big(R_k^{i}-\bar{R}^{i}\Big)^2} \cdot \sqrt{\sum_{k=1}^K\Big(R_k^{(L)}-\bar{R}^{(L)}\Big)^2}}
\end{equation}
where $R_k^{i}$ denotes the local ranking assigned to model $M_k^{i}$, and $R_k^{(L)}$ denotes the corresponding ranking in the global leaderboard $\mathcal{R}^{(L)}$ of the same model $M_k^{i}$ in $\mathcal{R}^{i}$. $\bar{R}^{i}$ and $\bar{R}^{(L)}$ are their respective mean rankings.
With the coefficient $\rho$, the filtered dataset is obtained as follows:
\begin{equation}
    \mathcal{D} = \{\mathcal{D}^{i} \mid \rho_i>\tau, \mathcal{D}^{i}\in \mathcal{D}_c, i=1,2,\cdots,N_c \}
\end{equation}
where $\tau$ is the filtering threshold.
Threshold selection is presented in Appendix \ref{app:filtering}.
Furthermore, we apply Llama Guard~\citep{inan2023llama} to identify and filter out user prompts that are potentially harmful or offensive, which ensures the exclusion of inappropriate content, contributing to the creation of a dataset that is broadly applicable and ethically sound.

\begin{wraptable}{r}{0.55\textwidth}\scriptsize
\vspace{-0cm}
\centering
\setlength{\abovecaptionskip}{3pt}
\setlength{\tabcolsep}{2pt}
\caption{Description of the curated dataset.}
\begin{tabular}{@{}lcccc@{}}
\toprule
\textbf{Model}         & \textbf{\#Instance} & \textbf{\#Visual Data} & \textbf{Visual Format}        & \textbf{Annotation} \\ \midrule
Text-to-Image & 500         & 2,000           & 512×512              & [1,2,3,4]    \\
Text-to-Video & 300         & 1,200           & 512×512, 8 FPS, 5s & [1,2,3,4]    \\ \bottomrule
\end{tabular}
\vspace{-0.3cm}
\label{tab:dataset}
\end{wraptable}

Following the curation and filtering processes outlined above, the K-Sort Eval dataset is ultimately established.
The dataset description, including size and format, is presented in Table \ref{tab:dataset}.

\subsection{Posterior Correction for Evaluation Reliability}
\label{sec:3-2}

In order to align with K-Sort Arena~\citep{li2025ksort}, we follow the probabilistic modeling approach for model capability $\theta$ as follows:
\begin{equation}
    \theta \sim \mathcal{N}(\mu, \sigma^2) 
\end{equation}
where $\mu$ and $\sigma$ are the model's expected capability and uncertainty, respectively, and $\mathcal{N}(\cdot)$ denotes the normal distribution.
In each round of voting, the probability density of the model’s current capability $P(\theta)$ is taken as the prior probability, while the voting result $P(D | \theta)$ serves as the likelihood function for the observation $D$ conditioned on $\theta$.
The posterior distribution of the capability $P(\theta | D)$ is then computed using Bayes’ theorem as follows:
\begin{equation}
    P(\theta | D)=\frac{P(D | \theta) P(\theta)}{\int_{-\infty}^{\infty} P\left(D | \theta^{\prime}\right) P\left(\theta^{\prime}\right) d \theta^{\prime}}
    =\frac{P(D | \theta) P(\theta)}{C}
    \label{eq:post}
\end{equation}

With the posterior probability, the posterior mean and variance of the model capability are updated as follows:
\begin{equation}
\begin{aligned}
\hat{\mu} &= \mathbb{E}[\theta | D]={\textstyle \int_{-\infty}^{\infty} \theta P(\theta | D) d \theta} \\
\hat{\sigma}^2 &= \operatorname{Var}[\theta | D]={\textstyle \int_{-\infty}^{\infty}(\theta-\mathbb{E}[\theta | D])^2  P(\theta | D) d \theta}
\label{eq:kwise_update}
\end{aligned}
\end{equation}
The derivation and results of the above equations are detailed in Appendix \ref{app:Bayesian}.

\textbf{Posterior Correction.} The above is the updating process under the ideal condition of unbiased human preferences. However, when employing a VLM-as-a-Judge, the results inherently contain hallucinations and biases, which cannot ensure alignment with true human preferences.
We define the misalignment between VLM predictions and human preferences as observation noise, and accordingly model the conditional distribution of the observation as a mixture distribution, resulting in the following noise-aware likelihood function:
\begin{equation}
    \widetilde{P}(D | \theta)=\lambda  P(D | \theta) + (1-\lambda)  P_n(D)
    \label{eq:likelihood}
\end{equation}
where $P_n(D)$ is the noise distribution of observation $D$, and $\lambda\in[0,1]$ is the confidence coefficient of the observation, with $\lambda=1$ indicating perfect reliability and no noise.

\begin{assumption}\label{assumption:1}
Assume that $P_n(D)$, representing a non-informative noise distribution over the observation $D$, is statistically independent of the parameter $\theta$.
\end{assumption}

\begin{lemma}\label{lemma:1}
Under Assumption \ref{assumption:1}, when the observation is subject to contamination by the noise distribution $P_n(D)$, the resulting posterior distribution $\widetilde{P}(\theta | D)$ can be represented as a mixture of the noise-free posterior distribution and the prior distribution. Specifically, it holds that:
\begin{equation}
    \widetilde{P}(\theta | D) = \lambda^{\prime} P(\theta | D) + (1-\lambda^{\prime}) P(\theta)
    \label{eq:lemma}
\end{equation}
where $\lambda^{\prime}\in[0,1]$ reflects the relative credibility of the posterior distribution induced by the observation with respect to the prior.
\end{lemma}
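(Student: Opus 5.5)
The plan is to apply Bayes' theorem as in \eqref{eq:post}, using the noise-aware likelihood \eqref{eq:likelihood} in place of $P(D|\theta)$, and then rewrite the result in the form of \eqref{eq:lemma}. First I will write
\begin{equation*}
\widetilde{P}(\theta|D)=\frac{\widetilde{P}(D|\theta)P(\theta)}{\int_{-\infty}^{\infty}\widetilde{P}(D|\theta')P(\theta')\,d\theta'}
=\frac{\lambda P(D|\theta)P(\theta)+(1-\lambda)P_n(D)P(\theta)}{\widetilde{C}} .
\end{equation*}
By Assumption \ref{assumption:1}, $P_n(D)$ does not depend on $\theta$. It therefore factors out of the integral in the denominator, and since $\int P(\theta')\,d\theta'=1$ we get $\widetilde{C}=\lambda C+(1-\lambda)P_n(D)$, where $C$ is the normalizing constant in \eqref{eq:post}.

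Next I will split the numerator into its two terms. From \eqref{eq:post}, the first term satisfies $P(D|\theta)P(\theta)=C\,P(\theta|D)$. Substituting gives
\begin{equation*}
\widetilde{P}(\theta|D)=\frac{\lambda C}{\lambda C+(1-\lambda)P_n(D)}\,P(\theta|D)+\frac{(1-\lambda)P_n(D)}{\lambda C+(1-\lambda)P_n(D)}\,P(\theta),
\end{equation*}
so I will set $\lambda'=\lambda C/(\lambda C+(1-\lambda)P_n(D))$. The coefficient of $P(\theta)$ is then exactly $1-\lambda'$, which yields \eqref{eq:lemma}.

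It remains to check that $\lambda'\in[0,1]$ and to justify its interpretation. Both $C$ and $P_n(D)$ are nonnegative, since $C$ is the marginal likelihood of $D$ under the noise-free model and $P_n(D)$ is a density. Together with $\lambda\in[0,1]$, this gives a numerator bounded by a nonnegative denominator, so $\lambda'\in[0,1]$. The endpoints behave as expected: $\lambda=1$ gives $\lambda'=1$, recovering the noise-free posterior, and $\lambda=0$ gives $\lambda'=0$, recovering the prior. For the interpretation, $\lambda'$ is the posterior weight on the event that $D$ came from the informative component rather than from noise. It grows with the evidence $C$ relative to $P_n(D)$, which is what "relative credibility" means in the statement.

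The main obstacle is the degenerate case where the denominator $\lambda C+(1-\lambda)P_n(D)$ vanishes, i.e.\ $D$ has zero probability under the mixture. In that case the posterior is undefined, so I will state that the lemma assumes $\widetilde{C}>0$, as any Bayesian update does. Apart from this, the only substantive step is the factorization that uses Assumption \ref{assumption:1}; without $\theta$-independence, the noise term would reweight the prior, and the second component would no longer be $P(\theta)$.
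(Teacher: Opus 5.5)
Your proof is correct and follows the paper's argument essentially step for step. Both apply Bayes' theorem with the mixture likelihood, use Assumption~\ref{assumption:1} to pull the $\theta$-independent $P_n(D)$ out of the normalizer to get $\lambda C+(1-\lambda)P_n(D)$, and then read off $\lambda'=\lambda C/(\lambda C+(1-\lambda)P_n(D))$. Your check that $\lambda'\in[0,1]$ (since $(1-\lambda)P_n(D)\ge 0$) and your caveat that the normalizer must be positive are small additions that the paper leaves implicit.
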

\begin{proof}
    According to Bayes' theorem, when the likelihood function is computed as in Eq. \ref{eq:likelihood}, the posterior probability is given by:
\begin{equation}
    \widetilde{P}(\theta | D)=\frac{\widetilde{P}(D | \theta) P(\theta)}{\int_{-\infty}^{\infty} \widetilde{P}\left(D | \theta^{\prime}\right) P\left(\theta^{\prime}\right) d \theta^{\prime}} 
    = \frac{[\lambda  P(D | \theta) + (1-\lambda)   P_n(D)]P(\theta)}{\int_{-\infty}^{\infty} [\lambda  P(D | \theta^{\prime}) + (1-\lambda)   P_n(D)]P\left(\theta^{\prime}\right) d \theta^{\prime}} 
\end{equation}

Based on the additivity of integration, the expression in the denominator can be split into two separate terms, $\int_{-\infty}^{\infty} \lambda  P(D | \theta^{\prime}) P\left(\theta^{\prime}\right) d \theta^{\prime}$ and $\int_{-\infty}^{\infty} (1-\lambda)   P_n(D)P\left(\theta^{\prime}\right) d \theta^{\prime}$.
Base on the homogeneity, the first term can be simplified as:
\begin{equation}
    {\textstyle \int_{-\infty}^{\infty} \lambda  P(D | \theta^{\prime}) P\left(\theta^{\prime}\right) d \theta^{\prime} =  \lambda\int_{-\infty}^{\infty}  P(D | \theta^{\prime}) P\left(\theta^{\prime}\right) d \theta^{\prime} =  \lambda C}
\end{equation}
where $C$ is the normalizing constant as in Eq. \ref{eq:post}.
For the second item,  with Assumption \ref{assumption:1}, since $P_n(D)$ is independent of $\theta$, we have:
\begin{equation}
    {\textstyle \int_{-\infty}^{\infty} (1-\lambda)   P_n(D)P\left(\theta^{\prime}\right) d \theta^{\prime} = (1-\lambda)   P_n(D)\int_{-\infty}^{\infty} P\left(\theta^{\prime}\right) d \theta^{\prime} = (1-\lambda)   P_n(D)}
\end{equation}

Substituting the simplified terms into the expression for the posterior, we obtain: 
\begin{equation}
\begin{aligned}
    \widetilde{P}(\theta | D)
    & = \frac{[\lambda  P(D | \theta) + (1-\lambda)   P_n(D)]P(\theta)}{\lambda   C+(1-\lambda)   P_n(D)} \\
    & = \frac{\lambda  C}{\lambda   C+(1-\lambda)   P_n(D)} \frac{P(D | \theta) P(\theta)}{C} + \frac{(1-\lambda)   P_n(D)}{\lambda   C+(1-\lambda)   P_n(D)} P(\theta)
\end{aligned}
\end{equation}
When the actual observation is $D^*$, we define $\lambda^{\prime}=\lambda  C / [\lambda   C+(1-\lambda)   P_n(D^*)]$. This completes the proof of Lemma \ref{lemma:1}.
\end{proof}

According to Lemma \ref{lemma:1}, the posterior under noise can be viewed as a weighted combination between the noise-free posterior and the prior. To derive the weighting factor, we treat human preferences $\mathcal{R}^{i}$ in the dataset as supervision, and quantify the noise level in the VLM outputs by computing Spearman’s rank correlation coefficient $\rho^{\prime}$ as follows:
\begin{equation}
    \rho_i^{\prime}=\frac{\sum_{k=1}^K\Big(R_k^{\text{(VLM)}}-\bar{R}^{\text{(VLM)}}\Big)\Big(R_k^{i}-\bar{R}^{i}\Big)}{\sqrt{\sum_{k=1}^K\Big(R_k^{\text{(VLM)}}-\bar{R}^{\text{(VLM)}}\Big)^2} \cdot \sqrt{\sum_{k=1}^K\Big(R_k^{i}-\bar{R}^{i}\Big)^2}  }
\end{equation}
where $R_k^{\text{(VLM)}}$ denotes the VLM's ranking result of the same model $M_k^{i}$ in $\mathcal{R}^{i}$.
Here, the range of $\rho^{\prime}$ is [-1,1]. To further normalize it and constrain the values to [0,1] as in Eq. \ref{eq:lemma}, we apply the sigmoid function as follows:
\begin{equation}
    \lambda_i^{\prime} = \text{Sigmoid}(\kappa\rho_i^{\prime}) = \frac{1}{1 + e^{-\kappa \rho_i^{\prime}}}
\end{equation}
where $\kappa$ is the coefficient that controls the slope.

Given $\lambda^{\prime}$, we proceed to derive the posterior mean and variance under the presence of noise. In Eq. \ref{eq:lemma}, the weighted posterior $\lambda^{\prime} P(\theta | D)$ follows a normal distribution $\mathcal{N}(\lambda^{\prime}\hat{\mu}, \lambda^{\prime 2}\hat{\sigma}^2)$, and the weighted prior  $(1-\lambda^{\prime}) P(\theta)$ follows a normal distribution $\mathcal{N}((1-\lambda^{\prime})\mu, (1-\lambda^{\prime})^2\sigma^2)$. Since $P(\theta | D)$ and $P(\theta)$ are independently distributed, the additive property of normal distributions applies. Therefore, their  weighted sum $\widetilde{P}(\theta | D)$ also follows a normal distribution, with its mean and variance given by:
\begin{equation}
\begin{aligned}
\hat{\mu}_c &= \lambda^{\prime}\hat{\mu} + (1-\lambda^{\prime})\mu \\
\hat{\sigma}^2_c &= \lambda^{\prime 2}\hat{\sigma}^2 + (1-\lambda^{\prime})^2\sigma^2
\label{eq:correct_update}
\end{aligned}
\end{equation}
where $\hat{\mu}_c$ and $\hat{\sigma}^2_c$ are the corrected posterior mean and variance, respectively.

\subsection{Dynamic Matching for Evaluation Efficiency}
\label{sec:3-3}
Modern datasets tend to establish their authority through increasingly large scales.
However, traditional evaluation methods predominantly rely on static evaluation, which requires exhaustively traversing all instances in the dataset, regardless of the model characteristics or the task complexity. This uniform strategy typically incurs numerous low-gain and unnecessary processes, potentially leading to redundant computation and inefficient evaluation~\citep{kossen2021active,polo2024tinybenchmarks}.

Therefore, we are motivated to adaptively select a representative subset based on model-specific traits, enabling a more efficient evaluation process.
Thanks to the probabilistic modeling of model capabilities, the evaluation process is equipped with a clear stopping criterion, i.e., the capability uncertainty $\sigma$ reaches the predefined threshold. To this end, we propose a dynamic matching strategy, aiming to dynamically select the dataset instance that is expected to make the largest reduction in the current uncertainty.
Specifically, we introduce an uncertainty criterion and a diversity criterion to jointly guide the selection process, thereby maximizing the benefit of each comparison.

\textbf{Uncertainty Criterion.}
Traditional approaches, such as exhaustive or random matching, can lead to uninformative comparisons. For instance, even when the current model has a high confidence in achieving a strong capability score, it may still be matched against a significantly weaker model. Such comparisons yield limited gains for updating the model capability.
Therefore, our goal is to promote matchups between models of comparable strength. In this way, the model maintains approximately a 50\% win rate, indicating maximum uncertainty in the comparison outcome.
Based on this insight, we define the uncertainty criterion $U_{\text{unc}}$ as follows:
\begin{equation}
    U_{\text{unc}}^i = -\frac{1}{K}\sum_{k=1}^K\left|\mu - \mu_k^{i}\right|
\end{equation}
where $\mu$ is the current capability mean of the new model being evaluated, and $\mu_k^{i}$ is the capability mean of the $k$-th model in the $i$-th dataset instance.

\textbf{Diversity Criterion.}
In the proposed dataset, there are $K$ models in each instance, making the evaluation of a new model essentially a one-to-many matching process.
This requires not only considering the relationship between the new model and each model within the instance, but also accounting for the interrelations among the $K$ models themselves. To this end, we aim to ensure that the group covers as diverse a set of opponents as possible, thereby avoiding homogeneous matchups and reducing information redundancy. 
Here, we quantify the diversity among models within an instance by measuring the degree of overlap between their Gaussian-modeled capability distributions, and the diversity criterion $U_{\text{div}}$ is defined as follows:
\begin{equation}
    U_{\text{div}}^i = -\sum_{1 \leq k_1<k_2 \leq K} \int_{-\infty}^{\infty} \min \big(P(\theta^i_{k_1}), P(\theta^i_{k_2})\big) d \theta
\end{equation}
where $P(\theta^i_{k})$ is the probability density of the $k$-th model's capability in the $i$-th dataset instance.

Based on the above two criterions, we can dynamically match the next dataset instance by maximizing the expected gain with respect to the current model status as follows:
\begin{equation}
    i^* = \arg\max_i \big(U_{\text{unc}}^i+ \alpha U_{\text{div}}^i\big)
    \label{eq:matching}
\end{equation}
where $\alpha$ is a balancing coefficient.

\subsection{Overall Pipeline of K-Sort Eval}
\label{sec:3-4}

In this section, we present the overall pipeline of K-Sort Eval in evaluating a new model. Specifically, we first initialize its capability ($\mu$, $\sigma$) and then put it into the following procedures:

$\triangleright$ \textbf{Dynamic Matching}: We select a dataset instance using Eq.~\ref{eq:matching}, and form a group of size $K{+}1$ by combining the new model with those in the selected instance.

$\triangleright$ \textbf{VLM Judgement}: To mitigate hallucinations of the VLM, we adopt two prompt design strategies: swapping operation and rule augmentation. Specifically, we first randomly shuffle the $K{+}1$ models to eliminate potential positional biases. Then, following the voting criteria in K-Sort Arena~\citep{li2025ksort}, we provide the VLM with identical judgement instructions, as presented in Appendix \ref{app:prompt}.

$\triangleright$ \textbf{Updating with Correction}: We compute the posterior mean and variance under a noise-free assumption via Eq.~\ref{eq:kwise_update}, and subsequently correct the results using Eq.~\ref{eq:correct_update}.

The above procedures are iteratively executed until the value of $\sigma$ falls below a predefined stopping criterion.
Finally, the model capability is estimated using the conservative score~\citep{phillips1966conservatism} defined as $S = \mu - \eta \sigma$, where $\eta$ is a coefficient typically set to 3.0.

\section{Experiments}
\label{sec:experiments}

\subsection{Experimental Setup}
K-Sort Eval enables automated preference evaluation of new models. 
For the VLM selection, GPT-4o~\citep{gpt4o} is used as the judge for images, while Qwen-VL-Max~\citep{Qwen-VL} is used for videos\footnote{GPT-4o API does not natively support video input.}.
It quantifies model capability using both absolute and relative metrics: the absolute metric is the conservative score, while the relative metric is the model's ranking in K-Sort Arena. 
The referenced Arena leaderboards are the version updated on Sep 15, 2025, as presented in Appendix \ref{app:leaderboard}.
In addition to reliability, K-Sort Eval also offers a notable efficiency advantage, as measured by the number of model runs required for evaluation, which equals the number of VLM calls.
For dataset curation, we set the filtering threshold $\tau$ to 0.75. The $\sigma$ threshold in the stopping criterion is set to 0.75.
The coefficients $\kappa$ and $\alpha$ are set to 5.0 and 0.5, respectively, after a simple grid search.

\subsection{Validation of Evaluation Reliability and Efficiency}

\textbf{Evaluation Reliability of K-Sort Eval.} 
We select models from the K-Sort Arena~\citep{li2025ksort} leaderboard and evaluate them using K-Sort Eval, including text-to-image and text-to-video models. These models span different positions on the leaderboard to demonstrate the generalizability of our dataset and method. The results, including both rankings and scores, are compared with those in K-Sort Arena, as shown in Table \ref{tab:reliability}.
The evaluation results of K-Sort Eval are consistent with those of K-Sort Arena, which is entirely based on human preferences. For instance, in the evaluation of FLUX.1-dev, the score produced by K-Sort Eval differs by only 0.03 from that in K-Sort Arena, with both methods assigning it the same rank of 5, which highlights the effectiveness of K-Sort Eval.

\setlength{\tabcolsep}{6pt}
\begin{table}[t]\scriptsize
\centering
\caption{Validation of evaluation reliability of K-Sort Eval for text-to-image/video models. The model scores and rankings produced by K-Sort Eval are highly consistent with K-Sort Arena.}
\begin{tabular}{@{}lcccccccccc@{}}
\toprule
\multirow{2.5}{*}{\textbf{\emph{Text-to-Image}}} 
 & \multicolumn{2}{c}{\textbf{FLUX.1-dev}} &
  \multicolumn{2}{c}{\textbf{Midjourney-v5.0}} &
  \multicolumn{2}{c}{\textbf{Realvisxl-v3.0}} &
  \multicolumn{2}{c}{\textbf{Dalle-2}} &
  \multicolumn{2}{c}{\textbf{SD-v1.5}} \\ 
  \cmidrule(lr){2-3}\cmidrule(lr){4-5} \cmidrule(lr){6-7} \cmidrule(lr){8-9} \cmidrule(lr){10-11}
                   & \textbf{Rank} & \textbf{Score} & \textbf{Rank} & \textbf{Score} & \textbf{Rank} & \textbf{Score} & \textbf{Rank} & \textbf{Score} & \textbf{Rank} & \textbf{Score} \\ \midrule
K-Sort Arena       &   5   &   28.83    &   11   &    27.44   &   16   &   23.93    &   24   &   21.74    &   29   &    20.10   \\
K-Sort Eval (Ours) &   5   &   28.86    &   11   &    27.50   &   16   &   24.02    &   24   &   21.79    &   29   &    20.03   \\ \midrule \midrule
\multirow{2.5}{*}{\textbf{\emph{Text-to-Video}}} 
 & \multicolumn{2}{c}{\textbf{Runway-Gen3}} &
  \multicolumn{2}{c}{\textbf{CogVideoX-5b}} &
  \multicolumn{2}{c}{\textbf{KLing-v1.0}} &
  \multicolumn{2}{c}{\textbf{Pika-v1.0}} &
  \multicolumn{2}{c}{\textbf{VideoCrafter2}} \\ 
  \cmidrule(lr){2-3}\cmidrule(lr){4-5} \cmidrule(lr){6-7} \cmidrule(lr){8-9} \cmidrule(lr){10-11}
                    & \textbf{Rank} & \textbf{Score} & \textbf{Rank} & \textbf{Score} & \textbf{Rank} & \textbf{Score} & \textbf{Rank} & \textbf{Score} & \textbf{Rank} & \textbf{Score} \\ \midrule
K-Sort Arena       &  2   & 33.93  &   3   & 33.60    &   5   & 32.80    &   7   & 29.17   &   12   & 23.65    \\
K-Sort Eval (Ours) &  2   & 33.98  &   3   & 33.63    &   5   & 32.90    &   7   & 29.10   &   12   & 23.72   \\ \bottomrule
\end{tabular}

\label{tab:reliability}
\end{table}
\begin{figure}[t]
    \centering
    \begin{subfigure}[t]{0.32\textwidth}
        \centering
        \includegraphics[width=\linewidth]{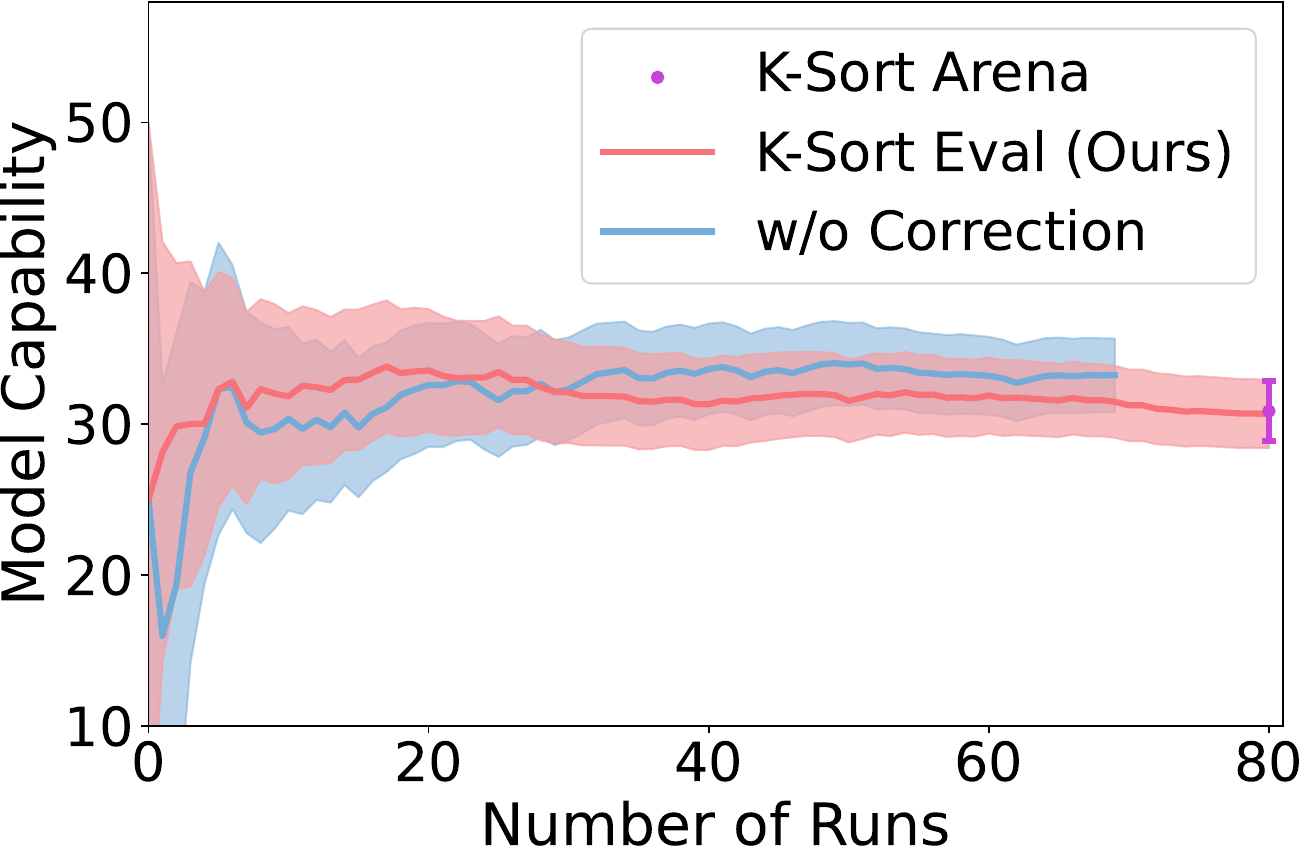}
        \caption{FLUX.1-dev}
    \end{subfigure}
    \hfill
    \begin{subfigure}[t]{0.32\textwidth}
        \centering
        \includegraphics[width=\linewidth]{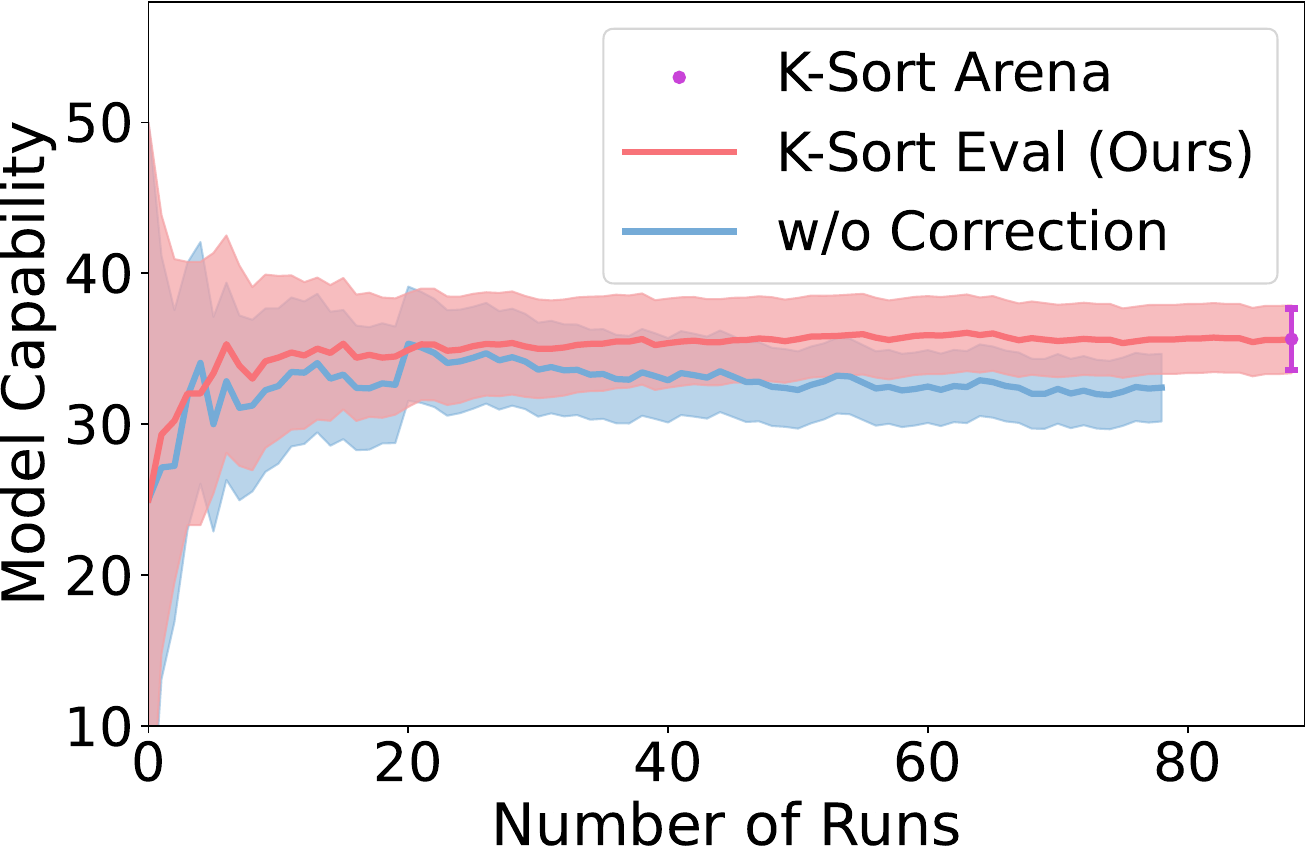}
        \caption{CogVideoX-5b}
    \end{subfigure}
     \hfill
    \begin{subfigure}[t]{0.32\textwidth}
        \centering
        \includegraphics[width=\linewidth]{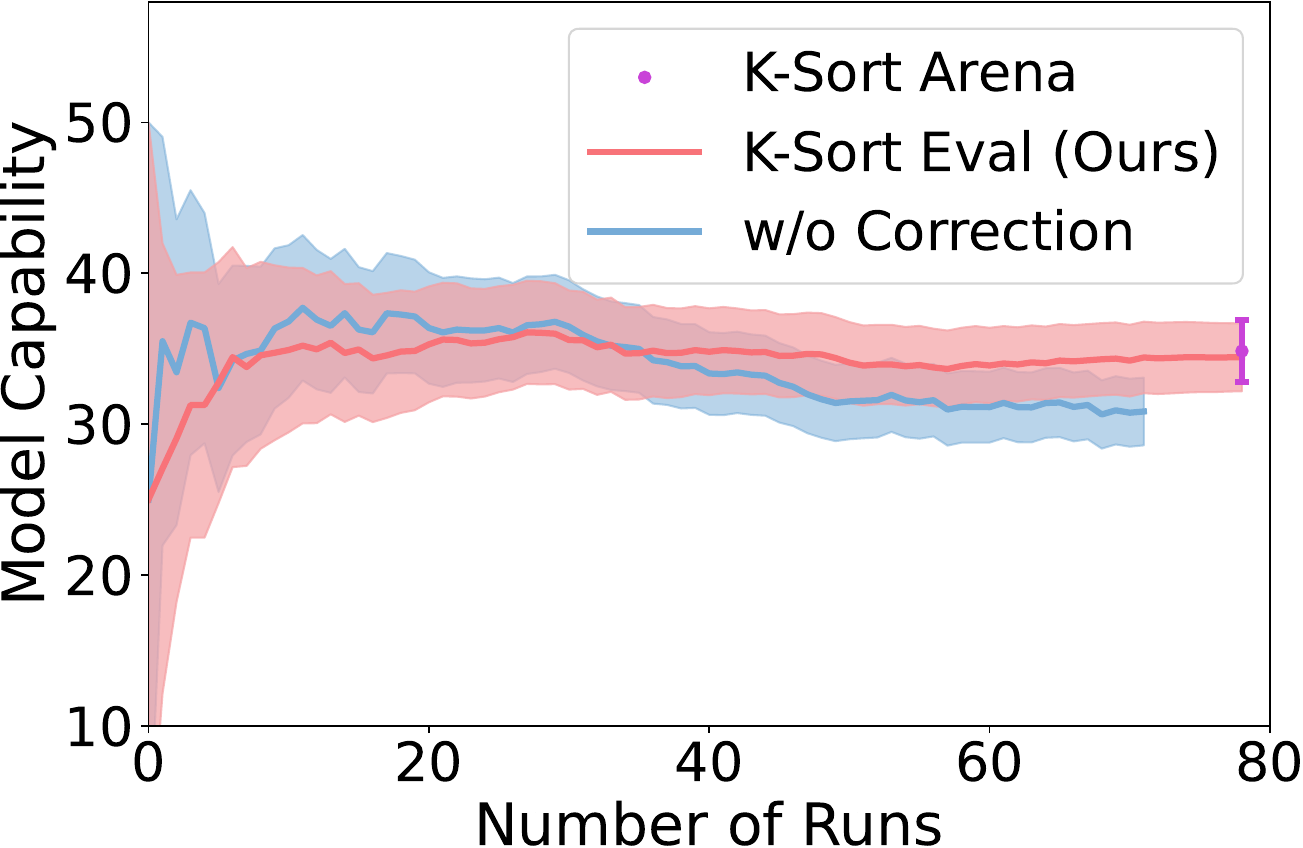}
        \caption{KLing-v1.0}
    \end{subfigure}
    \caption{Visualization of the evaluation processes. With posterior correction, K-Sort Eval achieves a smoother trajectory and produces more accurate results that are consistent with K-Sort Arena.}
    \label{fig:reliability}
\end{figure}

\begin{wrapfigure}{r}{0.48\textwidth}
    \vspace{-0.3cm}
    \centering
  \includegraphics[width=0.42\textwidth]{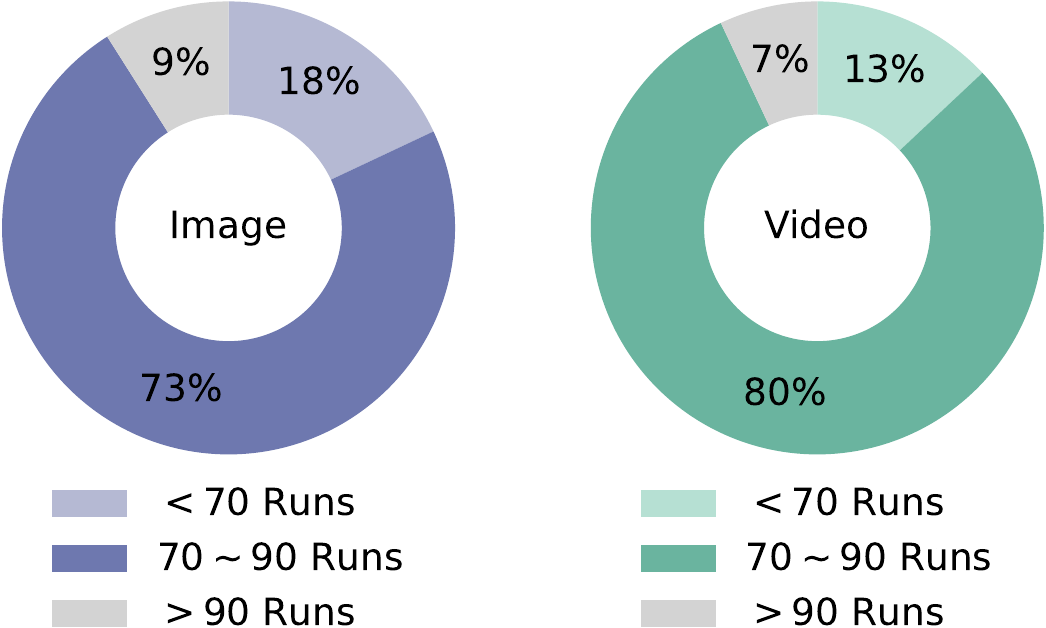}
  \caption{Number of runs required for the new model in the evaluation.}
  \label{fig:efficiency}
  \vspace{-0.6cm}
\end{wrapfigure}

\textbf{Evaluation Efficiency of K-Sort Eval.} Thanks to the proposed dynamic matching strategy, the evaluation process does not requires traversing the entire dataset, which significantly improves efficiency. Figure \ref{fig:efficiency} illustrates the number of runs required for the new model, with data from 100 tries covering all models.
The vast majority (91\% for images, 93\% for videos) complete the evaluation in less than 90 runs, which is a significant efficiency gain over existing methods such as FID (typically 50,000 runs)~\citep{heusel2017gans}, GenAI-Bench (1,600 runs)~\citep{li2024genai}.

\vspace{\baselineskip}

\clearpage

\begin{wrapfigure}{r}{0.48\textwidth}
    \vspace{-0.3cm}
        \centering
  \includegraphics[width=0.44\textwidth]{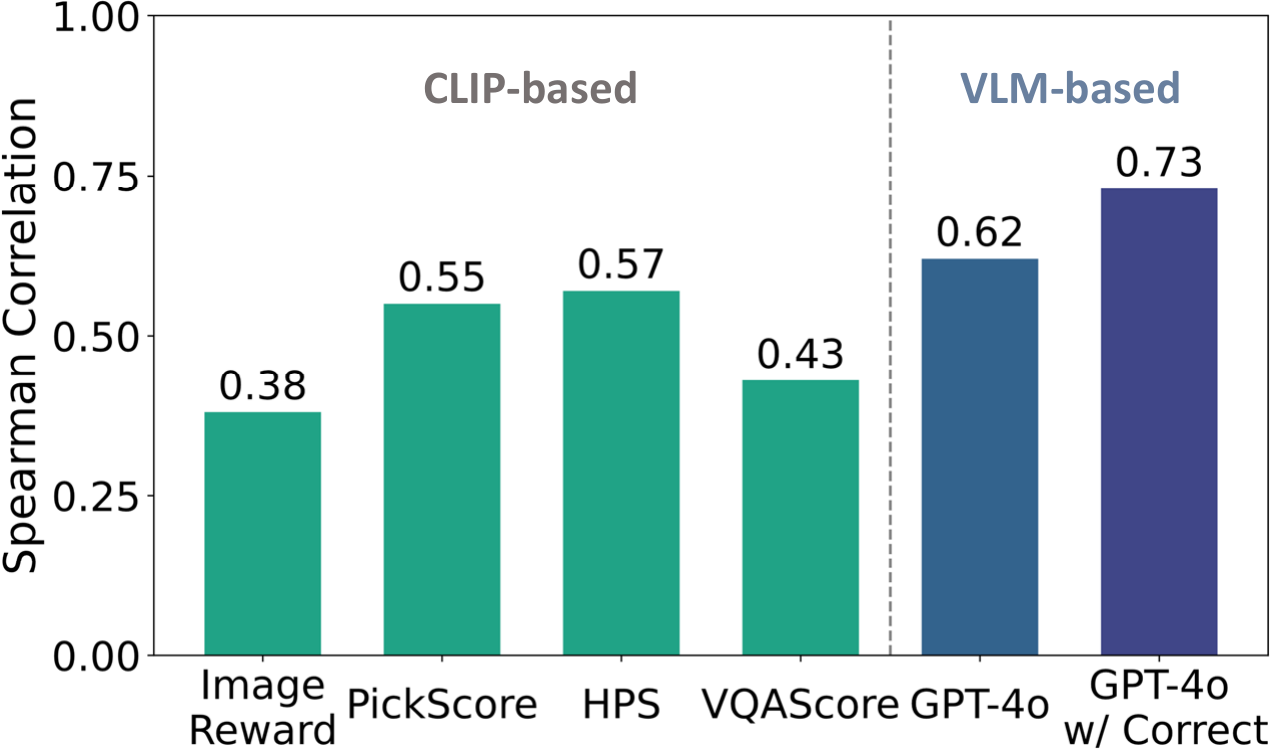}
  \caption{Correlations of different methods with actual human preferences.}
  \label{fig:preference}
  \vspace{-0.5cm}
\end{wrapfigure}

\subsection{Comparison with Preference Scoring Methods}
We select 100 instance groups to compare the correlations between different methods and actual human preferences, including CLIP-based scoring methods (ImageReward~\citep{xu2023imagereward}, PickScore~\citep{kirstain2023pick}, HPS~\citep{wu2023better}, and VQAScore~\citep{li2024genai}) and VLM-based methods using GPT-4o~\citep{gpt4o}, as shown in Figure \ref{fig:preference}.
We also report the result with correction obtained through $\lambda^{\prime}$ weighting.
GPT-4o consistently outperforms CLIP-based methods, and introducing correction significantly improves overall correlation, as it reduces the influence of noisy observations.

\subsection{Application of Evaluation on Compressed Models}

The evaluation of compressed models is also crucial in real-world applications~\citep{li2023vit,li2023repq,li2022patch}.
K-Sort Eval provides both absolute scores and relative rankings, thus it not only assesses the performance degradation after compression, but also identifies which standard model the compressed model is functionally comparable to. 

\setlength{\tabcolsep}{3pt}
\begin{table}[h]\scriptsize
\centering
\caption{Application of evaluation on compressed models, which reveals not only changes in absolute scores, but also the results of relative rankings. The model size is calculated in FP16 by default.}
\begin{subtable}[t]{0.45\textwidth}
\centering
\caption{Distilled models}
\makebox[\linewidth]{
\begin{tabular}{@{}lcc|cc@{}}
\toprule
\textbf{Model}  & \textbf{Rank}  & \textbf{Score} & \textbf{Size (GB)} & \textbf{Step} \\ \midrule
SD-v3.5-large   &       3        &     28.95    &   16.2    &     40      \\ \midrule
{\color{gray}Dalle-3} &       {\color{gray}8}        &   {\color{gray}28.25}    &   {\color{gray}-}   &    {\color{gray}-}     \\
SD-v3.5-large-turbo   &   9    &    27.71     &      16.2       &         4           \\
{\color{gray}FLUX.1-schnell}    &    {\color{gray}10}     &     {\color{gray}27.69}   &    {\color{gray}24}     &    {\color{gray}4}   \\ \midrule \midrule
SDXL   &     32    &   18.85    &     5.2        &         25        \\ \midrule
{\color{gray}SD-v1.5}  &    {\color{gray}29}     &   {\color{gray}20.11}   &     {\color{gray}1.72}        &  {\color{gray}50}   \\
SDXL-SSD-1b   &    30    &   19.40    &     2.6       &         25          \\
{\color{gray}SD-v2.1}  &    {\color{gray}31}   &    {\color{gray}18.95}    &     {\color{gray}1.73}          &  {\color{gray}50}      \\ \bottomrule
\label{tab:distill}
\end{tabular}
}
\end{subtable}
\begin{subtable}[t]{0.45\textwidth}
\centering
\caption{Quantized models}
\makebox[\linewidth]{
\begin{tabular}{@{}lcc|cc@{}}
\toprule
\textbf{Model}  & \textbf{Rank}  & \textbf{Score} & \textbf{Size (GB)} &  \textbf{Step} \\ \midrule
FLUX.1-dev         &     5     &    28.83      &    24    &     28     \\ \midrule
{\color{gray}Dalle-3}             &     {\color{gray}8}     &     {\color{gray}28.27}     &   {\color{gray}-}     &     {\color{gray}-}   \\
NF4 (BNB)            &    9      &    27.93      &    6     &    28    \\
{\color{gray}SD-v3.5-large-turbo} &     {\color{gray}9}     &      {\color{gray}27.73}     &     {\color{gray}16.2}    &    {\color{gray}4}    \\ \midrule
{\color{gray}FLUX.1-schnell}   &     {\color{gray}10}     &      {\color{gray}27.71}    &    {\color{gray}24}     &    {\color{gray}4}    \\
W4A4 (SVDQuant)      &     11     &     27.66     &    6     &    28      \\
{\color{gray}Midjourney-v5.0}     &     {\color{gray}11}     &      {\color{gray}27.44}     &    {\color{gray}-}    &     {\color{gray}-}     \\ \bottomrule
\label{tab:quant}
\end{tabular}
}
\end{subtable}
\end{table}

\textbf{Distilled Models.}
Table \ref{tab:distill} gives examples of distilled models, with reduced step and model size, respectively.
For SD-v3.5-large-turbo, the number of inference steps is reduced from 40 to 4, resulting in a score drop of 1.24 and a ranking shift from 4 to 9. Based on its ranking, we easily conclude that its performance is comparable to that of Dalle-3 and FLUX.1-schnell. 

\textbf{Quantized Models.}
Table \ref{tab:quant} reports the quantization results of FLUX.1-dev, including BNB~\citep{dettmers2023qlora} and SVDQuant~\citep{li2024svdqunat}. When quantizing in NF4 format, the model size is reduced by 4$\times$, while delivering a score decrease of 0.90. Crucially, it offers an intuitive measure of relative capability and directly points to a benchmark model of similar strength.

\subsection{Ablation Studies}

\setlength{\tabcolsep}{6pt}
\begin{table}[!h]\scriptsize
\centering
\caption{Ablation studies on effect of the proposed modules and prompt designs. We report the results of text-to-image model FLUX.1-dev and text-to-video model CogVideoX-5b.}
\begin{subtable}[t]{0.45\textwidth}
\centering
\caption{FLUX.1-dev}
\begin{tabular}{@{}lccc@{}}
\toprule
\textbf{Method}          & \textbf{Rank} & \textbf{Score} & \textbf{\#Runs} \\ \midrule
K-Sort Arena             &   5    &   28.83   &      -       \\
K-Sort Eval (Ours)       &   5    &   28.86   &      81        \\ \midrule
w/o Posterior Correction &   3    &   29.32   &           70   \\
w/o Dynamic Matching     &   5    &   28.79   &         500     \\ \midrule
w/o Wapping Operation    &   4    &   28.93   &      79       \\
w/o Rule Augmentation    &   9   &   28.13   &      119        \\ \bottomrule
\end{tabular}
\end{subtable}
\begin{subtable}[t]{0.45\textwidth}
\centering
\caption{CogVideoX-5b}
\begin{tabular}{@{}lccc@{}}
\toprule
\textbf{Method}          & \textbf{Rank} & \textbf{Score} & \textbf{\#Runs} \\ \midrule
K-Sort Arena             &   3    &   33.60   &       -         \\
K-Sort Eval (Ours)       &   3    &   33.63   &       89        \\ \midrule
w/o Posterior Correction &   6    &   31.86   &          79    \\
w/o Dynamic Matching     &   3    &    33.65  &       300       \\ \midrule
w/o Wapping Operation    &   3    &    33.55  &       90         \\
w/o Rule Augmentation    &   5    &    33.10  &      130       \\ \bottomrule
\end{tabular}
\end{subtable}
\label{tab:ablation}
\end{table}

\textbf{Effect of the Proposed Modules.}
We verify the validity of posterior correction and dynamic matching, as shown in Table \ref{tab:ablation}.
When evaluating FLUX.1-dev, without posterior correction, every VLM judgment is fully accepted, even when misaligned with human preferences. This leads to reduced evaluation accuracy, with a score deviation of 0.49 and a rank discrepancy of 2 compared to Arena. Additionally, in the absence of dynamic matching, the entire dataset needs to be traversed, leading to increased costs.
The grid search of coefficients $\kappa$ and $\alpha$ are shown in Appendix \ref{app:coef}.

\textbf{Prompt Designs for VLM.}
Table \ref{tab:ablation} further illustrates the impact of prompt designs. In the case of FLUX.1-dev, for example, the model ranking is shifted when the wrapping operation is removed. Moreover, without rule augmentation, the VLM lacks clear and uniform principles in the judgment, resulting in a substantial score difference of 0.70.

\section{Conclusion}
\label{sec:conclusion}

In this work, we propose K-Sort Eval, a scalable and reliable evaluation framework that leverages vision-language models (VLMs) with posterior correction and dynamic matching strategies to approximate human preferences in generative model assessment. By utilizing high-quality dataset from K-Sort Arena and introducing Bayesian correction based on VLM-human consistency, K-Sort Eval significantly improves alignment with human judgements. Furthermore, the proposed dynamic matching enhances evaluation efficiency by selecting instances with maximum expected gains. Experimental results show that K-Sort Eval achieves alignment with human-voted scores and rankings, while substantially reducing evaluation costs, highlighting its reliability and efficiency.

\section*{Acknowledgments}
This work is supported in part by the Strategic Priority Research Program of Chinese Academy of Sciences under Grant Number XDB1100000; in part by the National Natural Science Foundation of China under Grant Number 62276255; in part by the Postdoctoral Fellowship Program of CPSF under Grant Number GZC20251175.
Yang You's research group is being sponsored by NUS startup grant (Presidential Young Professorship), Singapore MOE Tier-1 grant, ByteDance grant, NUS ARTIC grant, Apple grant, Alibaba grant and Adobe gift.

\bibliography{iclr2026_conference}
\bibliographystyle{iclr2026_conference}

\clearpage
\appendix

\section{Evaluation Criteria in K-Sort Arena}
\label{app:criteria}
In K-Sort Arena, all crowdsourced participants are professors and graduate students specializing in visual generation, affiliated with institutions such as University of California Berkeley, Chinese Academy of Sciences, National University of Singapore, and Nanyang Technological University, etc.
They all complete pre-voting training, particularly on the following evaluation criteria:

$\triangleright$ \textbf{Text-to-Image Models } The evaluation is based on alignment (50\%) and aesthetics (50\%). Alignment encompasses entity (30\%) and style (20\%), while aesthetics includes photorealism (30\%), light and shadow rendering (10\%), and the absence of artifacts (10\%). 

$\triangleright$ \textbf{Text-to-Video Models } The models are also evaluated based on alignment (50\%) and aesthetics (50\%). Alignment is assessed based on video content matching (20\%), movement matching (15\%), and inter-frame consistency (15\%), while aesthetics considers photorealism (30\%), physical correctness (10\%), and the absence of artifacts (10\%). 

Additionally, as an open-source project, K-Sort Arena actively encourages contributions from the public community, with the criteria serving as a guiding reference for their voting as well.

\section{Filtering Threshold in Dataset Curation}
\label{app:filtering}

We filter the data by the Spearman’s rank correlation coefficient between the local rankings within the dataset and the corresponding model's ranking in the overall leaderboard to prevent preference contamination. Due to performance fluctuations relative to a model’s true capability and the presence of ties, even preference-aligned data cannot always guarantee a correlation of 1.0. Therefore, it is necessary to determine a sufficiently reliable selection threshold.

\begin{wraptable}{r}{0.5\textwidth}\scriptsize
\vspace{-0.4cm}
\setlength{\abovecaptionskip}{3pt}
\centering
\setlength{\tabcolsep}{4pt}
\caption{Spearman’s rank correlation coefficients in different cases, including tie and misordering.}
\begin{tabular}{@{}lcc@{}}
\toprule
\textbf{Case}                  & \textbf{Rank} & \textbf{Spearman} \\ \midrule
Ground Truth                   & {[}0,1,2,3{]} & -                 \\ \midrule
Fully consistent               & {[}0,1,2,3{]} & 1.00              \\
Tie between two models         & {[}0,1,1,2{]} & 0.95              \\
Tie among three models         & {[}0,1,1,1{]} & 0.77              \\
Misordering between two models & {[}0,2,1,3{]} & 0.80              \\
{\color{gray}Misordering among three models} & {\color{gray}{[}0,3,1,2{]}} & {\color{gray}0.40}              \\ \bottomrule
\end{tabular}
\vspace{-0.4cm}
\label{tab:spearman}
\end{wraptable}

Table \ref{tab:spearman} lists spearman’s rank correlation coefficients in different cases, including tie and misordering cases.
In our dataset curation, we consider the cases of tie between two models and misordering between two models to be valid samples, while the cases of misordering among three models is invalid samples. As a result, in order to balance validity and diversity, we set the filtering threshold to 0.75.

\section{Derivation of Bayesian Updating}
\label{app:Bayesian}
We begin by analyzing the case of two competing models, \( M_1 \) and \( M_2 \), before generalizing to the comparison among \( K \) models. Suppose the observation \( D \) indicates that model \( M_1 \) outperforms model \( M_2 \). The likelihood of this event, conditioned on the latent performance parameters \( \theta_1 \) and \( \theta_2 \), is given by:
\begin{equation}
P(D | \theta_1, \theta_2) = P(X_1 > X_2) = \Phi\left( \frac{\theta_1 - \theta_2}{\sqrt{\beta_1^2 + \beta_2^2}} \right)
\end{equation}
where \( \Phi(x) \) denotes the cumulative distribution function (CDF) of the standard normal distribution, and  \( \phi(x) \) is the corresponding probability density function (PDF):
\begin{equation}
\Phi(x) = \int_{-\infty}^{x} \phi(u) \, du, \quad \phi(x) = \frac{1}{\sqrt{2\pi}} e^{-x^2 / 2}
\end{equation}

Using Bayes' theorem, we can then derive the joint posterior distribution of \( (\theta_1, \theta_2) \) given the observation \( D \) as follows:
\begin{equation}
\label{eq:d0}
P(\theta_1, \theta_2|D) \propto P(\theta_1) P(\theta_2) P(D|\theta_1, \theta_2) 
= \phi\left(\frac{\theta_1-\mu_1}{\sigma_1}\right) \phi\left(\frac{\theta_2-\mu_2}{\sigma_2}\right) \Phi\left(\frac{\theta_1-\theta_2}{\sqrt{\beta_1^2+\beta_2^2}}\right)
\end{equation}

The marginal posterior distribution of \( \theta_1 \) can be obtained by integrating out \( \theta_2 \) from the joint posterior:
\begin{equation}
\label{eq:d1}
P(\theta_1 | D) = \int_{-\infty}^{\infty} P(\theta_1, \theta_2 | D) \, d\theta_2 
\propto \phi\left( \frac{\theta_1 - \mu_1}{\sigma_1} \right) 
\Phi\left( \frac{\theta_1 - \mu_2}{\sqrt{\beta_1^2 + \beta_2^2 + \sigma_2^2}} \right)
\end{equation}

Given the marginal posterior distribution, the posterior expectation of \( \theta_1 \) can then be computed as:
\begin{equation}
\label{eq:d2}
\begin{aligned}
\hat{\mu}_1 = E\left[\theta_1|D\right] = \frac{\int_{-\infty}^{\infty} \theta_1P(\theta_1|D)d \theta_1}{\int_{-\infty}^{\infty} P(\theta_1|D)d \theta_1} 
&= \mu_1+\frac{\sigma_1^2}{\sqrt{\sum \left(\beta_i^2+\sigma_i^2\right)}} \frac{\phi\left(\frac{\mu_1-\mu_2}{\sqrt{\sum \left(\beta_i^2+\sigma_i^2\right)}}\right)}{\Phi\left(\frac{\mu_1-\mu_2}{\sqrt{\sum \left(\beta_i^2+\sigma_i^2\right)}}\right)} \\
&= \mu_1+\frac{\sigma_1^2}{c_{12}} \cdot
\mathcal{V}\left(\frac{\mu_1-\mu_2}{c_{12}} \right)
\end{aligned}
\end{equation}
where $\mathcal{V}(x)=\phi (x)/\Phi (x)$ and $c_{ij}^2=\sum\left(\beta_i^2+\sigma_i^2\right)$. The mean \( \hat{\mu}_1 \) of \( \theta_1 \) is updated accordingly based on the observed outcome.
In a similar manner, the posterior variance \( \hat{\sigma_1}^2 \) is updated using the following expression:
\begin{equation}
\begin{aligned}
\hat{\sigma_1}^2 = Var[\theta_1|D] = E[\theta_1^2|D] - (E[\theta_1|D])^2 
&= \sigma_1^2\cdot\left(1-\frac{\sigma_1^2}{\sum \left(\beta_i^2+\sigma_i^2\right)} \cdot
\mathcal{W}\left(\frac{\mu_1-\mu_2}{\sqrt{\sum \left(\beta_i^2+\sigma_i^2\right)}} \right)\right) \\
&= \sigma_1^2\cdot\left(1-\frac{\sigma_1^2}{c_{12}^2} \cdot
\mathcal{W}\left(\frac{\mu_1-\mu_2}{c_{12}} \right)\right)
\end{aligned}
\end{equation}
where $\mathcal{W}(x) = \mathcal{V}(x)(\mathcal{V}(x)+x)$.
The aforementioned process completes the Bayesian updating for a pairwise comparison between two models. We now extend this framework to a free-for-all comparison among \( K \) models. In this case, the update rules for the performance parameters of the \( i \)-th model are given by the following equations:
\begin{equation}
\hat{\mu}_i = \mu_i + \sigma_i^2\cdot\Bigg(\sum_{q:r_q>r_q} \frac{1}{c_{iq}} \cdot \mathcal{V}\left(\frac{\mu_i-\mu_q}{c_{iq}}\right)
+ \sum_{q:r_i<r_q} \frac{-1}{c_{iq}} \cdot \mathcal{V}\left(\frac{\mu_q-\mu_i}{c_{iq}}\right)\Bigg)
\label{eq:kwise_update_mu}
\end{equation}
\begin{equation}
\hat{\sigma_i}^2 = \sigma_i^2\cdot\Bigg(1-\Bigg( \sum_{q:r_i>r_q}\frac{\sigma_i^2}{c_{iq}^2} \cdot
\mathcal{W}\left(\frac{\mu_i-\mu_q}{c_{iq}} \right) 
+ \sum_{q:r_i<r_q}\frac{\sigma_i^2}{c_{iq}^2} \cdot
\mathcal{W}\left(\frac{\mu_q-\mu_i}{c_{iq}} \right) \Bigg) \Bigg)
\label{eq:kwise_update_sigma}
\end{equation}

\section{Rule Augmentation for VLM Prompt}
\label{app:prompt}

We adopt the rule augmentation strategy to provide clear and effective guidance for VLM judgements. To ensure consistency with K-Sort Arena, these rules are aligned with the manual voting criteria used by human annotators. This enhances the interpretability of VLM outputs and improves their comparability with human preferences. The complete prompt design is illustrated in Figure~\ref{fig:prompt}.

\begin{figure}[h]
  \centering
  \includegraphics[width=1.0\textwidth]{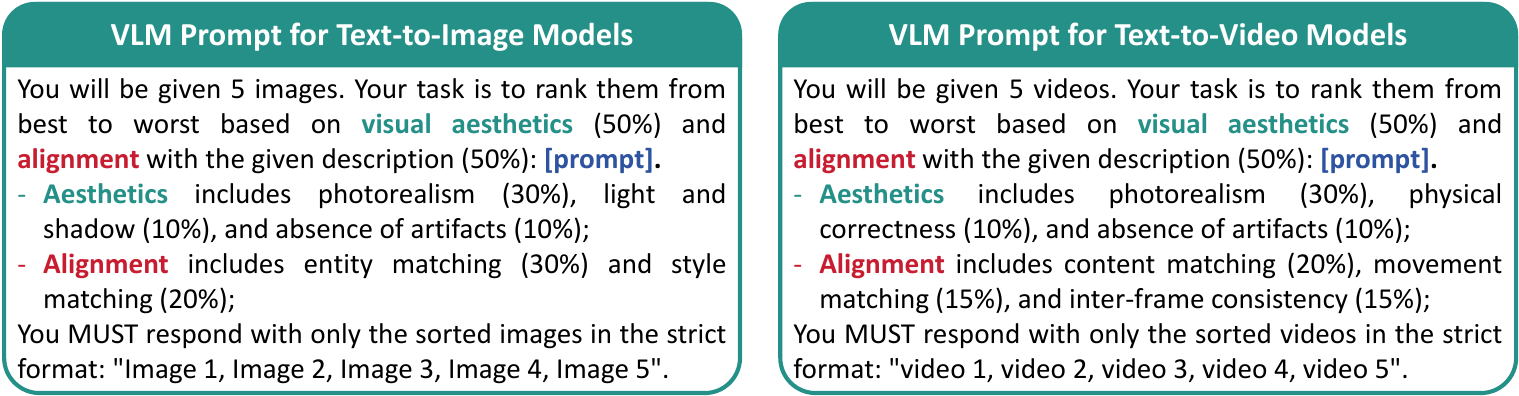}
  \caption{Prompt design that provides voting criteria consistent with human voting in K-Sort Arena, serving as guidance for the VLM Judgement and helping reduce hallucinations.}
  \label{fig:prompt}
\end{figure}

\section{K-Sort Arena Leaderboard}
\label{app:leaderboard}

We use the leaderboard provided by K-Sort Arena~\citep{li2025ksort} as the ground-truth baseline for human preferences. In this work, all referenced leaderboards are based on the version updated on Sep 15, 2025, as shown in Table \ref{tab:leaderboard}.

\setlength{\tabcolsep}{2pt}
\begin{table}[h]\scriptsize
\vspace{-0.2cm}
\centering
\caption{K-Sort Arena leaderboards updated on Sep 15, 2025.}
\begin{subtable}[t]{0.48\textwidth}
\vspace{-5pt}
\captionsetup{skip=2pt}
\centering
\caption{Text-to-Image Models}
\makebox[\linewidth]{
\begin{tabular}{@{}cccc@{}}
\toprule
\textbf{Rank} & \textbf{Model} & \textbf{Organization} & \textbf{Score ($\mu/\sigma$)} \\ \midrule
1  & GPT-4o                 & OpenAI               & 30.86 (33.20 / 0.78) \\
2  & FLUX-1.1-pro           & Black Forest Labs    & 29.52 (31.57 / 0.68) \\
3  & SD-v3.5-large          & Stability AI         & 28.97 (31.13 / 0.72) \\
4  & FLUX.1-pro             & Black Forest Labs    & 28.90 (30.89 / 0.66) \\
5  & FLUX.1-dev             & Black Forest Labs    & 28.83 (30.81 / 0.66) \\
6  & Aurora                 & xAI                  & 28.72 (31.05 / 0.78) \\
7  & Midjourney-v6.0        & Midjourney           & 28.64 (30.64 / 0.67) \\
8  & Dalle-3                & OpenAI               & 28.27 (30.26 / 0.67) \\
9  & SD-v3.5-large-turbo    & Stability AI         & 27.73 (29.94 / 0.74) \\
10 & FLUX.1-schnell         & Black Forest Labs    & 27.71 (29.72 / 0.67) \\
11 & Midjourney-v5.0        & Midjourney           & 27.44 (29.47 / 0.68) \\
12 & SD-v3.0                & Stability AI         & 27.13 (29.10 / 0.66) \\
13 & Pixart-Sigma           & PixArt-Alpha         & 26.38 (28.39 / 0.67) \\
14 & Proteus-v0.2           & DataAutoGPT3         & 24.69 (26.68 / 0.67) \\
15 & Open-Dalle-v1.1        & DataAutoGPT3         & 24.65 (26.65 / 0.67) \\
16 & Realvisxl-v3.0         & Realistic Vision     & 23.93 (25.94 / 0.67) \\
17 & Dreamshaper-xl         & Lykon                & 23.89 (25.85 / 0.66) \\
18 & Realvisxl-v2.0         & Realistic Vision     & 23.87 (25.87 / 0.67) \\
19 & Kandinsky-v2.2         & AI-Forever           & 23.57 (25.56 / 0.66) \\
20 & Deepfloyd-IF           & DeepFloyd            & 23.47 (25.47 / 0.67) \\
21 & Meissonic              & Alibaba, Skywork AI  & 22.69 (24.93 / 0.75) \\
22 & Kandinsky-v2.0         & AI-Forever           & 22.51 (24.48 / 0.65) \\
23 & SDXL-turbo             & Stability AI         & 21.83 (23.93 / 0.70) \\
24 & Dalle-2                & OpenAI               & 21.74 (23.72 / 0.66) \\
25 & Playground-v2.5        & Playground AI        & 21.60 (23.55 / 0.65) \\
26 & Openjourney-v4         & Prompthero           & 21.41 (23.39 / 0.66) \\
27 & LCM-v1.5               & Tsinghua             & 20.89 (22.90 / 0.67) \\
28 & SD-turbo               & Stability AI         & 20.25 (22.36 / 0.70) \\
29 & SD-v1.5                & Stability AI         & 20.10 (22.12 / 0.67) \\
30 & SSD-1b                 & Segmind              & 19.40 (21.41 / 0.67) \\
31 & SD-v2.1                & Stability AI         & 18.94 (20.93 / 0.66) \\
32 & SDXL                   & Stability AI         & 18.85 (20.84 / 0.66) \\
33 & Playground-v2.0        & Playground AI        & 18.66 (20.67 / 0.67) \\
34 & SDXL-Lightning         & ByteDance            & 18.06 (20.05 / 0.67) \\
35 & Stable-cascade         & Stability AI         & 16.69 (18.80 / 0.70) \\
36 & SDXL-Deepcache         & NUS                  & 16.16 (18.15 / 0.66) \\
\bottomrule
\end{tabular}
}
\end{subtable}
\hfill
\begin{subtable}[t]{0.48\textwidth}
\vspace{-5pt}
\captionsetup{skip=2pt}
\centering
\caption{Text-to-Video Models}
\makebox[\linewidth]{
\begin{tabular}{@{}cccc@{}}
\toprule
\textbf{Rank} & \textbf{Model} & \textbf{Organization} & \textbf{Score ($\mu/\sigma$)} \\ \midrule
1  & Sora (official) & OpenAI             & 34.66 (37.42 / 0.92) \\
2  & Runway-Gen3             & Runway             & 33.93 (35.94 / 0.67) \\
3  & CogVideoX-5b            & Tsinghua           & 33.60 (35.63 / 0.68) \\
4  & Sora (release)          & OpenAI             & 33.53 (35.61 / 0.69) \\
5  & KLing-v1.0              & Kuaishou           & 32.80 (34.84 / 0.68) \\
6  & Runway-Gen2             & Runway             & 29.57 (31.63 / 0.69) \\
7  & Pika-v1.0               & Pika               & 29.17 (31.27 / 0.70) \\
8  & LaVie                   & Shanghai AI Lab    & 28.68 (30.67 / 0.67) \\
9  & OpenSora                & HPC-AI             & 27.39 (29.41 / 0.67) \\
10 & Pika-beta               & Pika               & 27.38 (29.49 / 0.70) \\
11 & AnimateDiff             & CUHK etc.          & 26.46 (28.49 / 0.68) \\
12 & VideoCrafter2           & Tencent            & 23.65 (25.70 / 0.69) \\
13 & StableVideoDiffusion    & Stability AI       & 23.01 (25.09 / 0.70) \\
14 & Zeroscope-v2-xl         & Cerspense          & 16.96 (19.33 / 0.79) \\
\bottomrule
\end{tabular}
}
\end{subtable}
\label{tab:leaderboard}
\vspace{-0.3cm}
\end{table}

\section{Grid Search of Coefficients}
\label{app:coef}

We perform a simple grid search of coefficients $\kappa$ and $\alpha$, as reported in Table~\ref{tab:grid-search}.
Based on the grid search results, we select $\kappa=5$ and $\alpha=0.5$ as the optimal hyperparameters. This setting achieves a strong alignment with the K-Sort Arena benchmark (rank = 6), while maintaining a competitive performance score (28.86). Notably, it also results in the lowest number of model runs (81), indicating high evaluation efficiency. Compared to other settings, this combination provides the best balance between ranking consistency and evaluation cost.

\setlength{\tabcolsep}{12pt}
\begin{table}[h]\scriptsize
\vspace{-0.2cm}
\centering
\caption{Grid search of coefficients $\kappa$ and $\alpha$. We report results for text-to-image model FLUX.1-dev and they hold for other models.}
\begin{tabular}{l|c|ccccc}
\toprule
$\kappa$ & K-Sort Arena & \textbf{1} & \textbf{3} & \textbf{5} & \textbf{7} & \textbf{9} \\ \midrule
Rank   & 6 & 8 & 6 & \cellcolor{highlight}6 & 5 & 9 \\
Score  & 28.83 & 28.58 & 28.80 & \cellcolor{highlight}28.86 & 28.90 & 28.22 \\
\#Runs & - & 110 & 90 & \cellcolor{highlight}81 & 74 & 72 \\ \midrule \midrule
$\alpha$ & K-Sort Arena & \textbf{0.1} & \textbf{0.3} & \textbf{0.5} & \textbf{0.7} & \textbf{0.9} \\ \midrule
Rank   & 6 & 6 & 6 & \cellcolor{highlight}6 & 6 & 5 \\
Score  & 28.82 & 28.77 & 28.80 & \cellcolor{highlight}28.86 & 28.72 & 28.93 \\
\#Runs & - & 107 & 89 & \cellcolor{highlight}81 & 84 & 90 \\ \bottomrule
\end{tabular}
\label{tab:grid-search}
\end{table}

\end{document}